\crefname{section}{Sect.}{Sects.}
\Crefname{section}{Sect.}{Sects.}
\title{Efficient Decoding Methods for Language Models on Encrypted Data}
\author{Matan Avitan\textsuperscript{\normalfont1,2}\orcid{0009-0001-0762-9616}\quad 
Moran Baruch\textsuperscript{\normalfont1}\orcid{0000-0003-0615-6164}\quad 
Nir Drucker\textsuperscript{\normalfont1}\orcid{0000-0002-7273-4797}\quad 
Itamar Zimerman\textsuperscript{\normalfont1,3}\orcid{0000-0001-8321-0609}\quad
Yoav Goldberg\textsuperscript{\normalfont2,4}\orcid{0000-0002-6497-829X}\\
\\
\textsuperscript{1}IBM Research
\textsuperscript{2}Bar-Ilan University \\
\textsuperscript{3}Tel Aviv University
\textsuperscript{4}Allen Institute for Artificial Intelligence \\
}
\begin{document}
\maketitle
\begin{abstract}
Large language models (LLMs) power modern AI applications, but processing sensitive data on untrusted servers raises privacy concerns. Homomorphic encryption (HE) enables computation on encrypted data for secure inference. However, neural text generation requires decoding methods like \argmax and \sampling, which are non-polynomial and thus computationally expensive under encryption, creating a significant performance bottleneck. We introduce \cutmax, an HE-friendly \argmax algorithm that reduces ciphertext operations compared to prior methods, enabling practical greedy decoding under encryption. We also propose the first HE-compatible nucleus (top-$p$) sampling method, leveraging \cutmax for efficient stochastic decoding with provable privacy guarantees. Both techniques are polynomial, supporting efficient inference in privacy-preserving settings. Moreover, their differentiability facilitates gradient-based sequence-level optimization as a polynomial alternative to straight-through estimators. We further provide strong theoretical guarantees for \cutmax, proving its convergence via exponential amplification of the gap ratio between the maximum and runner-up elements. Evaluations on realistic LLM outputs show latency reductions of $24\times$–$35\times$ over baselines, advancing secure text generation.
\end{abstract}
\begin{figure*}[t]
    \centering
    \includegraphics[width=0.96\textwidth]{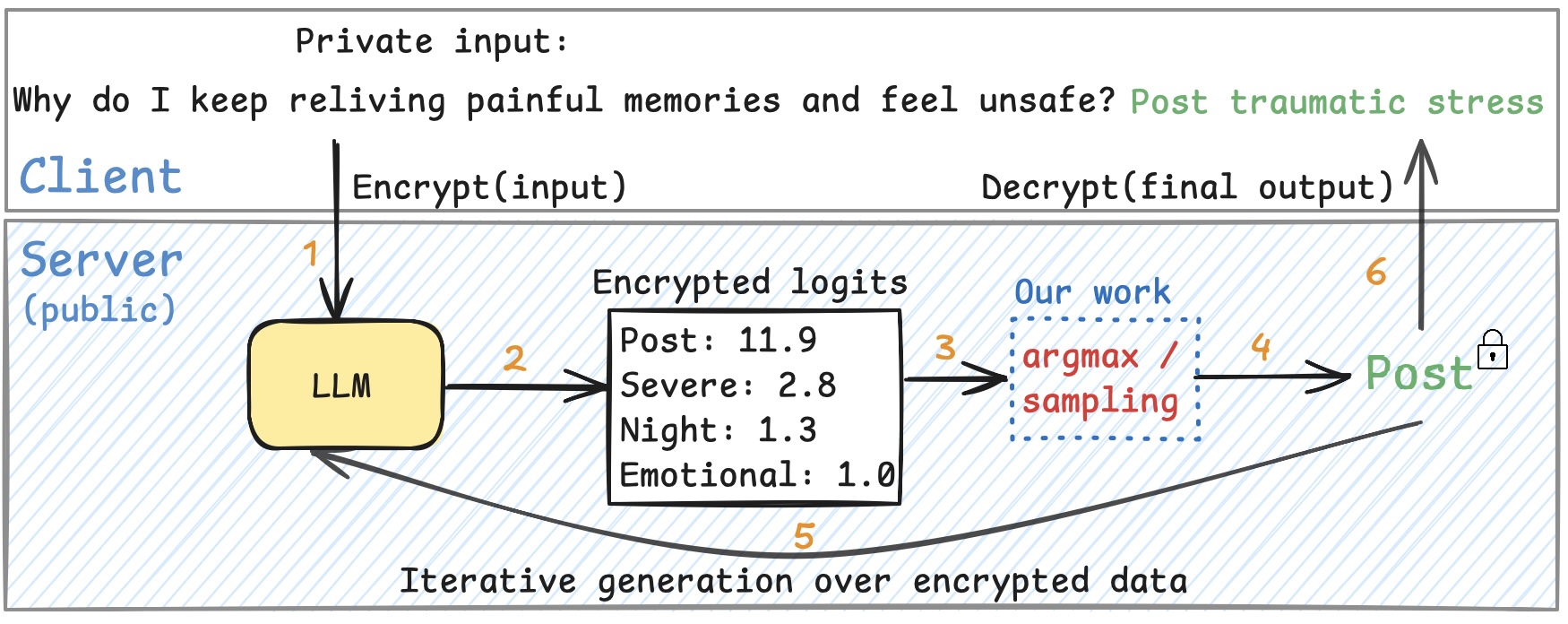}
        \caption{\textbf{Scope}: Secure LLM generation over the clients' encrypted data using HE (model is not necessarily encrypted). Here, standard decoding methods like \argmax and \sampling (\textbf{red}) for selecting the next token are not HE-friendly or considered inefficient. We introduce efficient and scalable HE-friendly methods to evaluate them under encryption.        \label{fig:motivation}}
\end{figure*}
\section{Introduction}
Recent advances in LLMs have enabled the development of powerful AI systems capable of generating fluent text at scale~\citep{brown2020language}. However, deploying these models in real-world applications often involves sending sensitive user data, such as personal messages or medical records, to remote servers, raising significant privacy concerns~\citep{yao2024survey, yan2024protecting}.
Despite major progress in efficient encrypted inference, extending these methods to generative LLMs remains an open challenge due to the non-polynomial nature of decoding operations. Homomorphic encryption (HE) offers a promising solution by allowing computations on encrypted data, ensuring that servers can process queries without accessing their plaintext content~\citep{Gentry2009}. Under HE, users encrypt their inputs, the server performs computations (e.g., running an LLM), and returns an encrypted result that only the user can decrypt.

In this paper, we present differentiable and polynomial \argmax and \nSampling algorithms tailored for encrypted LLM decoding. Unlike prior approaches, such as that of \citet{Bengio2013EstimatingOP}, which use \argmax in the forward pass and estimate gradients in the backward pass via straight-through estimators (STE), our methods are fully differentiable in their plaintext form (without approximations) and remain so under polynomial HE approximations. This differentiability, together with their polynomial form, makes them well suited to AI-privacy settings such as HE; the plaintext formulations are exactly differentiable, and the HE instantiations preserve differentiability via polynomial approximations (see \Cref{subsec:he-cutmax}).
Most HE schemes, including the CKKS scheme used in this work~\citep{ckks2017}, support only polynomial operations like addition and multiplication. This limitation necessitates the design of \textit{HE-friendly} algorithms that rely exclusively on such operations, posing a challenge for text generation tasks that involve complex decoding steps. While secure inference on LLMs using HE has been demonstrated~\citep{zimerman2024power, Castro_2024, transformerArgmax}, extending these techniques to multi-token text generation remains challenging. Decoding methods like \argmax (for greedy decoding) and \sampling (e.g., \nSampling ~\citep{holtzman2019curious}) are computationally inefficient or impractical in the encrypted domain due to their reliance on non-polynomial operations.

State-of-the-art homomorphic \argmax implementations follow two comparison-heavy patterns: a \emph{tournament} tree and an all-to-all \emph{league} schedule, both realized via deep polynomial approximations of \sign \citep{HeCompArgmaxTournament, LRArgmaxTournament, Tyche, pheonix2022}.
A detailed illustration of these designs is provided in \Cref{fig:he-argmax} (\Cref{sec:prev-methods}), which highlights their sequential comparison stages and reliance on costly \sign evaluations.

\cutmax\ removes comparisons altogether. Each of its $T$ iterations applies two global reductions (mean and variance), an inverse-square-root (\invsqr), an elementwise odd power, and a final normalization (\Cref{sec:method}, \Cref{alg:cutmax}, \Cref{subsec:he-cutmax}, \Cref{alg:cutmaxhe}). Consequently, the number of \emph{sequential} stages is a small constant $T$—\emph{weakly dependent} on the vector length $n$ (e.g., the vocabulary size)—whereas tournament and league require $\log_2 n$ and $n$ sequential rounds, respectively, each performing encrypted comparisons via polynomial approximations of $\sign$. Because \invsqr and division (\inv) are shallower and faster than $\sign$ at similar accuracy (\Cref{tab:golds-sign}), \cutmax\ achieves much lower depth and fewer bootstraps per token. Empirically, $T$ is a small constant (e.g., $T{\le}3$–$4$), consistent with the latency gains we observe for vocabularies of $|\mathcal V|\!\approx\!3.3\times 10^4$ and $1.5\times 10^5$ (\Cref{sec:emp_res}, \Cref{tab:empirical_argmax}). Implementation-level costs such as rotations under SIMD packing are deferred to \Cref{subsec:he-cutmax} and \Cref{app:argmaxsimd}.

Our contributions address key bottlenecks in secure text generation
, as illustrated in \Cref{fig:motivation}, and include:
(i) \textbf{CutMax}. An efficient, \gls{HE}-friendly \argmax algorithm that enables practical greedy decoding under encryption with fewer ciphertext operations than previous methods. 
(ii) \textbf{Encrypted nucleus sampling}. The first \gls{HE}-compatible \nucleus (top-$p$) \sampling method, enables stochastic decoding with provable privacy guarantees, via \cutmax. 
(iii) \textbf{Theoretical convergence guarantees}. A formal proof establishing that \cutmax converges via exponential amplification of the gap ratio between the maximum and runner-up elements, providing a rigorous foundation for its rapid convergence in a small number of iterations, weakly dependent on the array size.
(iv) \textbf{Differentiable decoding primitives}. Implementations of \argmax and \nSampling that are real-analytic and fully differentiable in plaintext, leveraging smooth operations like \(1/\sqrt{x}\) (and polynomial-only in their HE approximations) to enable exact gradient-based sequence-level training as a theoretically grounded alternative to straight-through estimators.
By enabling efficient and accurate multi-token generation under full encryption, our work advances the deployment of privacy-preserving LLMs in real-world settings, bridging a critical gap in secure AI systems.
\section{Background: Homomorphic Argmax}\label{sec:argmax}
State-of-the-art homomorphic $\argmax$ implementations for LLM decoding rely on comparison-heavy designs, primarily the \emph{tournament} tree and \emph{league} schedule, both implemented via deep polynomial approximations of the $\textsc{sign}$ function~\citep{HeCompArgmaxTournament, LRArgmaxTournament, Tyche, pheonix2022, Zhang_2024, zhang2025secpesecurepromptensembling}. In the tournament design, approximately $n{-}1$ pairwise comparisons are organized into $\log_2 n$ sequential stages, halving the number of candidates per stage until the maximum remains. The league design performs $\binom{n}{2}=\Theta(n^2)$ comparisons across $n$ sequential rounds, accumulating scores such that the maximum achieves a score of $n{-}1$. Both methods are illustrated in \Cref{fig:he-argmax} (\Cref{sec:prev-methods}). Despite leveraging CKKS SIMD parallelism to execute multiple comparisons within a round, these approaches incur high multiplicative depth due to repeated $\textsc{sign}$ evaluations, leading to costly bootstrapping and limiting scalability for large vocabularies ($n \sim 10^5$). Some works, such as \citet{HeCompArgmaxTournament, Tyche, grivet2021speed}, use TFHE~\citep{tfhe}, restricting input sizes (e.g., $n \le 256$) due to efficiency constraints. Hybrid approaches combine tournament and league methods to exploit SIMD packing~\citep{BGVCompArgmaxCombined, tournamentandleague}, but still rely on slow $\textsc{sign}$ approximations. As shown in \Cref{tab:golds-sign} (\Cref{sec:approximations}), $\textsc{sign}$ evaluations are significantly deeper and slower than \inverse or \inverseSqrt operations, which our \cutmax algorithm uses instead (\Cref{subsec:he-cutmax}). These limitations make prior methods impractical for efficient, privacy-preserving LLM decoding, motivating our polynomial-based approach that eliminates comparisons entirely.

\subsection{HE Preliminaries}
\gls{HE} enables computation on encrypted data without requiring decryption \cite{Gentry2009}. Informally, it is defined as follows: Let $\R_{1}(+, \cdot), \R_2(\oplus, \odot)$ be two rings for the plaintext and ciphertext spaces, respectively. Given plaintext inputs $m_1, m_2 \in \R_1$, their encryption $\heenc{m_i} := \text{Enc}(m_i) \in \R_2$, satisfies the following correctness and homomorphic properties:
\begin{itemize}
    \setlength\itemsep{0.1em}
    \item $ \text{Dec}(\heenc{m_1}) = m_1 + \epsilon$
    \item $ \text{Dec}(\heenc{m_1} \oplus \heenc{m_2}) = m_1 + m_2 + \epsilon$
    \item $ \text{Dec}(\heenc{m_1} \odot \heenc{m_2}) = m_1 \cdot m_2 + \epsilon $
\end{itemize}
where \( \epsilon \) is a small noise introduced by the \gls{HE} scheme, similar to the noise accumulated in floating-point computations. \gls{HE} is \textit{semantically secure} and thus every call to $Enc(m_1)$ results with a new pseudo random ciphertext.
Details about the standard \textit{threat-model} of \gls{HE}-based applications are provided in \Cref{app:threat}.
As \gls{HE} only supports \textit{polynomial operations} (additions and multiplications), standard transformer components like \softmax\ and \layerNorm\ are not directly supported and must be approximated or replaced. Prior-art such as \cite{Dowlin_2016, helayers, baruch2022methodology, baruch2024polynomial, zimerman2024converting, zimerman2024power, Castro_2024, transformerArgmax} have proposed polynomial-friendly adaptations for encrypted inference.
We focus on the \textit{generation process} where new challenges arise: decoding operations such as \argmax and \sampling are not polynomial. 

\section{Method}\label{sec:method}
This section presents our polynomial algorithms for \argmax and \nSampling, tailored for \gls{HE}.
\subsection{The CutMax Algorithm}
Intuitively, \cutmax repeatedly `stretches' the distribution of values and cuts off the lower part, such that after a few iterations only the highest value remains significantly non-zero (see \Cref{fig:cutmax_intuition}). \cutmax is inspired by standardization in statistics (subtracting the mean and dividing by standard deviation), alongside the idea of centering the array around 1; therefore, when taking an odd power of the standardized values, values smaller than the mean would vanish while values greater than the mean would amplify.
The algorithm's fast convergence is due to the right skewness induced by the power operation.
\begin{algorithm}[t!]
    \caption{\cutmax}
    \label{alg:cutmax}
    \begin{algorithmic}[1]
    \setlength{\baselineskip}{1.7em}
    \small
    \Require $X = (X_1,\dots,X_n)$, and $p,T \in \ZZp$, where $p$ is odd and $c \in \RR_+$.
    \Ensure $Z$ - one-hot approx. of $\argmax(X)$.
    \State $Y^{(1)} = X$
    \For{$t=1$ {\bf to} $T$}
        \State $\mu = \frac{1}{n}\sum_{i=1}^n Y^{(t)}_i$
        \Comment{Mean of $Y^{(t)}$}
        \State $\sigma^2 = \frac{1}{n}\sum_{i=1}^n (Y^{(t)}_i - \mu)^2$
        \Comment{Variance of $Y^{(t)}$}
        \State \label{line:cut_y_t} $Y^{(t+1)} = \left(\frac{Y^{(t)}_i - \mu}{c \cdot \sqrt{\sigma^2}} + 1\right)_{i \in [n]}$
        \Comment{Standardization}
        \State $Y^{(t+1)} = \left(Y^{(t+1)}\right)^p_{i \in [n]}$
    \EndFor
    \State $Z = \left(\frac{Y^{(T)}_i}{\sum_j{Y^{(T)}_j}}\right)_{i \in [n]}$
    \State \textbf{return} $Z$
    \end{algorithmic}
\end{algorithm}
   
\Cref{alg:cutmax} provides the pseudo-code for \cutmax, which takes as input a vector $X$ of $n$ elements, an odd integer $p$, a constant scaling factor $c>0$, and the maximum number of iterations $T$. For instance, one instantiation uses $p=11$, $c=5$, and $T=3$. We denote the index set as $[n] = \{1, \ldots, n\}$. \cutmax is an iterative algorithm that runs for at most $T$ iterations or until convergence. Each iteration $t$ consists of two steps: \textbf{standardization} and \noindent\textbf{distance amplification}.
\begin{figure*}[t!]
  \centering
    \includegraphics[width=\linewidth]{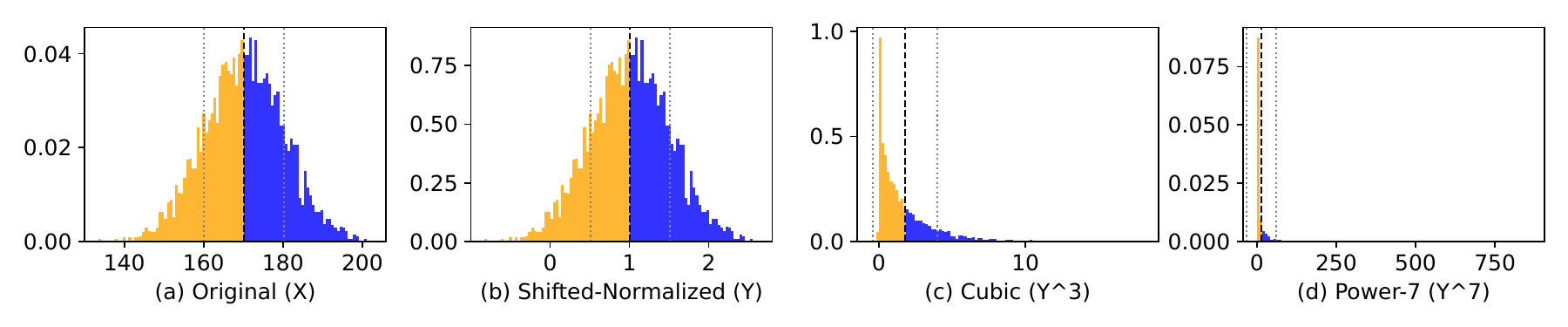}
    \caption{
    \textbf{Illustration of CutMax First Iteration on a Normal Distribution}
      \cutmax algorithm first iteration illustration, applied to the original random variable $X \sim{} \mathrm{Normal}(170, 10)$ (a). We first standardize and shift $X$ using $Y=\frac{X-\mu_X}{c\sigma_X} + 1$ with $c=5$ (b), then raise $Y$ to the power $p=3$ (c) or $p=7$ (d) to induce right skew. The resulting distribution $Z=Y^7$ (d) has a maximum normalized height
    $\max \bigl(Z / \sum_i Z_i \bigr) = 0.0033$, indicating a strong right‐tail skew. A vertical black line indicates the mean of every distribution and gray lines indicate the first standard deviation.}
  \label{fig:cutmax_intuition}
  \vspace{-7pt}
\end{figure*}
\paragraph{Standardization (Lines 3-5).} The normalization process starts by computing the mean $\mu$ of the current vector state $Y^{(t)}$, the variance $\sigma^2$, then the inverse standard deviation $\frac{1}{\sqrt{\sigma^2}}$. Subsequently, it scales $Y^{(t)}$ by subtracting the mean and multiplying by $\frac{1}{c\cdot\sigma}$. The choice of $c$ controls how much we ``flatten'' the distribution before raising the power. A larger $c$ means we divide by a larger number, making the normalized values smaller in magnitude.
\paragraph{Distance Amplification (Line 6).} We raise each normalized element to an odd power $p$, which preserves the sign of $Y_i - \mu$.
Elements above (resp. below) the mean ($Y_i - \mu > 0$) remain positive (resp. negative).
The effect of this power raising is twofold: it shrinks the magnitude of small values and amplifies the larger values, where
\begin{align}
 \left|Y^{(t)}_i\right| < 1 \Longrightarrow \left|Y^{(t)}_i\right| \gg \left|Y^{(t)}_i\right|^p \\
 \left|Y^{(t)}_i\right| > 1 \Longrightarrow \left|Y^{(t)}_i\right| \ll \left|Y^{(t)}_i\right|^p
\end{align}
By appropriately scaling $Y^{(t)}$ before this step, we ensure that all but the largest elements satisfy $|Y_i| < 1$, causing them to shrink, while the largest (and possibly a few near-largest) elements may have $|Y_i| > 1$ and thus grow. This drastically amplifies the gap between the maximum 
and the rest.

\paragraph{Convergence.} As formally proven in Appendix~\ref{app:cutmax-convergence-standalone}, each \cutmax iteration unconditionally grows the gap between the maximum and runner-up elements, yielding exponential amplification of their ratio and ensuring convergence under a mild $\delta$-gap condition on the input.
Beyond this theoretical contraction, \cutmax also prunes much more aggressively
than comparison-based schemes. Unlike the tournament \argmax, which discards only
half of the $\yT$ values per stage, \cutmax zeroes out a \emph{large majority}
each round: empirically, after iteration $t$, the empirical CDF at the mean
satisfies $\mathrm{CDF}_{Y^{(t)}}(\mu^{(t)}) \gg \tfrac{1}{2}$, so far more
than half of the $Y^{(t)}_i$ fall below the mean and are driven toward zero by
the subsequent odd-power step (\Cref{fig:cutmax_convergence_quantile}). 

As established by \Cref{thm:cutmax-convergence}, these properties guarantee
global convergence to the fixed point under mild conditions; for the practical
hyperparameters we use, the limit is nearly one-hot. This explains why only a
small, nearly constant number of iterations (e.g., $T\!\le\!3\text{–}4$) suffices
in practice, regardless of the initial logit distribution (\Cref{sec:emp_res}).

\begin{figure}[t!]
  \centering
  \includegraphics[width=0.91\columnwidth]{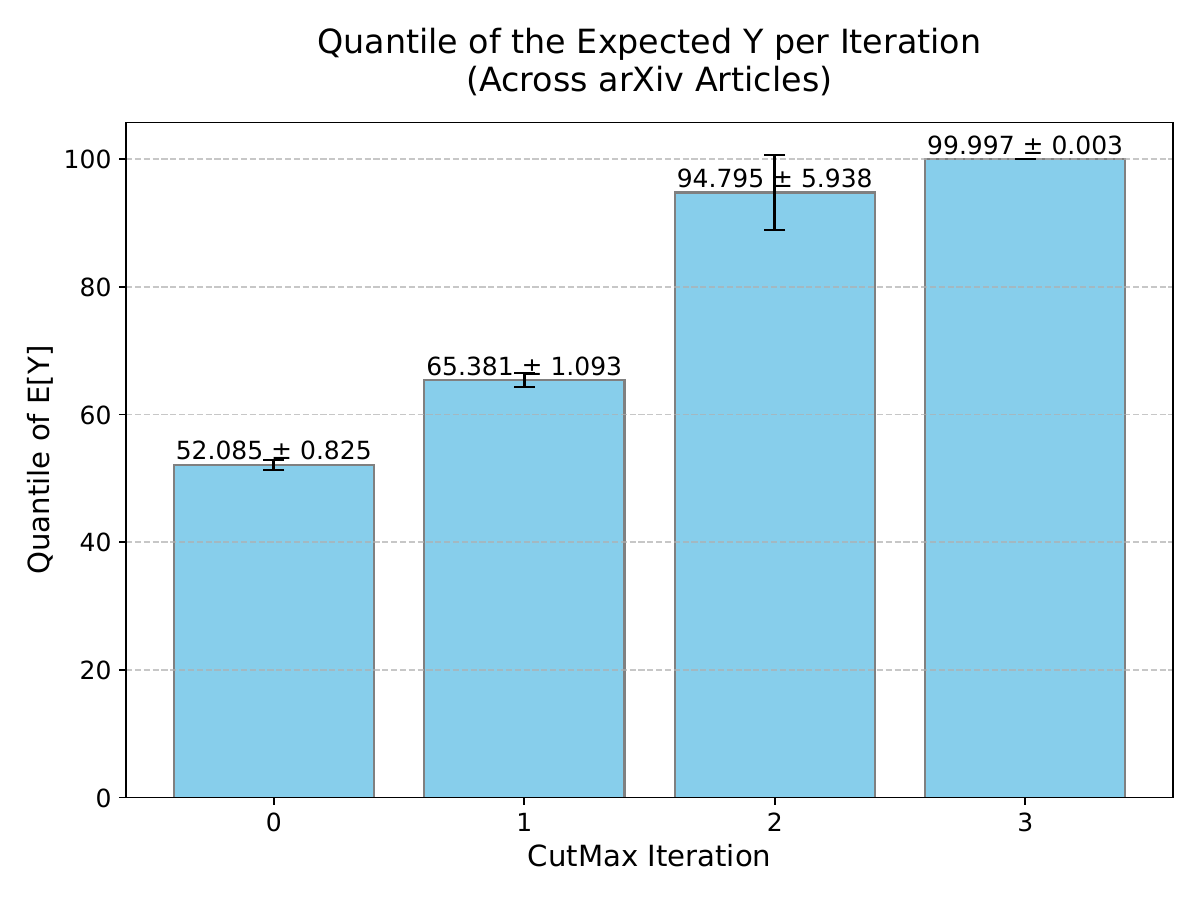}
  \caption{\textbf{Evolution of Logits Below the Mean Across CutMax Iterations} (After \Cref{alg:cutmax}, \Cref{line:cut_y_t}). We ran CutMax for $T=3$ with the best hyperparameters $(p,c)=(19,27)$, found via grid search, on 1,000 arXiv articles passed through GPT-2 (vocabulary size $|\mathcal V|=50{,}257$ \citep{Radford2019LanguageMA}). Bars show the mean$\pm$std of the fraction of entries $\yT_i < \mathbb E[\yT]$ at each iteration $t$.
  }
  \label{fig:cutmax_convergence_quantile}
  \vspace{-7pt}
\end{figure}

\subsection{HE-Friendly CutMax}\label{subsec:he-cutmax}
Building on the HE preliminaries in \Cref{sec:argmax}, we adapt \cutmax to be fully polynomial by approximating $1/\sigma$ (Line 5, \Cref{alg:cutmax}) and $1/\sum_j Y^{(T)}_j$ (Line 7) using Goldschmidt methods \cite{goldschmidt} (details in \Cref{sec:goldschmidt}). This yields \Cref{alg:cutmaxhe}, the FHE variant.
Notably, having a polynomial algorithm is not enough, we also need to ensure that no overflows or precision issues compromise the correctness of the results. Specifically, we must prove the following:
\smallskip
\noindent\textbf{(i) No overflows.} Let $B_{\text{CKKS}}$ be the bound configured by the scheme on the unencrypted inputs, beyond which an overflow may occur. Then, all intermediate values produced by the algorithm must remain below $B_{\text{CKKS}}$.
\smallskip
\noindent\textbf{(ii) No approximation errors.} The \invsqr and \inv approximations assume inputs lie within a certain range, e.g., $[\frac{1}{2^{n}}, 1]$ for some small $n$. If an input $x$ falls in a wider range $[a, b]$, we instead compute $\frac{\invsqr(\frac{x}{b})}{b}$. We must show that $\frac{a}{b} > \frac{1}{2^{n}}$; otherwise, the approximation may yield incorrect results.
\Cref{sec:argmax} reviewed the tournament method that requires $O(n)$ comparisons in $\log n$ sequential stages and the league method that requires $n^2$ comparisons in $n$ sequential stages. In contrast, \cutmax requires only $T$ \invsqr and 1 \inv operations. 
\Cref{tab:golds-sign} in \Cref{sec:approximations} (taken from \cite{asor}) shows that the \invsqr implementation is $1.5\text{--}2.1 \times$ faster than the \sign implementation used in all prior art, at the same error level.
In addition, $T$ is weakly-dependent on $n$ and much smaller. Consequently, \cutmax outperforms prior-art.
Unfortunately, the situation is not that simple. Modern \gls{HE} schemes support \gls{SIMD} operations, meaning a ciphertext can encrypt a vector of $s$ elements instead of just a single element. In this case, multiplications ($\odot$), additions ($\oplus$), and subtractions ($\ominus$) are applied element-wise homomorphically over the vector. Furthermore, a homomorphic rotation by $\ell < s$ positions is also available \citep{ckks2017}.
\begin{algorithm}[t!]
    \caption{\cutmaxhe~- Encrypted \cutmax}
    \label{alg:cutmaxhe}
    \begin{algorithmic}[1]
    \small
    \Require $\heenc{X} = \heenc{(\XX_1,\dots,X_n)}$, and $p,T \in \ZZp$, where $p$ is odd, and $c \in \RR_+$.
    \Ensure $\heenc{Z}$ one-hot approx. of $\argmax(X)$.
    \State $\heenc{Y^{(1)}} = \heenc{X}$
    \For{$t=1$ {\bf to} $T$}
        \State $\heenc{\mu} = \frac{1}{n}\rotateandsum(\heenc{Y^{(t)}})$
       
        \State $\heenc{\sigma^2} = \frac{1}{n}\rotateandsum\left((\heenc{Y^{(t)}} \ominus \heenc{\mu})^2\right)$
        \setlength{\baselineskip}{2.4em}
       
        \State $\heenc{\sigma^{-1}} = \invsqr(\heenc{\sigma^2})$
       
        \State $\heenc{Y^{(t+1)}} = \left(\frac{1}{c} \heenc{\sigma^{-1}} \odot (\heenc{Y^{(t)}} \ominus \heenc{\mu})\right) \oplus {1}$
        \State $\heenc{Y^{(t+1)}} = \left( (\heenc{Y^{(t+1)}}\right)^p$
    \EndFor
    \State $\heenc{Z} = \heenc{Y^{(T)}} \odot \inv\left(\rotateandsum(\heenc{Y^{(T)}})\right)$
    \State \textbf{return} $\heenc{Z}$
    \end{algorithmic}
\end{algorithm}
We defer the discussion of how \gls{SIMD} impacts latency to \Cref{app:argmaxsimd}, and here we complete the picture with an implementation of \cutmax over \gls{HE}. \Cref{alg:cutmaxhe} is the \gls{HE} approximation variant of \cutmax, where the input vector $X$ is encrypted. For brevity, we assume $n=a \cdot s$, $a \in \ZZ_+$, and write the algorithm so that modern compilers such as HELayers \cite{helayers} can automatically implement it.
Lines 3, 4, and 8 of \Cref{alg:cutmax} are efficiently computed in \Cref{alg:cutmaxhe} using the \rotateandsum method (see \cite{he4ds}), which takes an encrypted vector $v = (v_1, \ldots, v_s)$ and returns a vector of $s$ elements, each containing the sum $\sum v_i$. This requires $\log s$ rotations and additions.
Multiplying by a plaintext scalar ($\frac{1}{n}$ at lines 3,4 and $\frac{1}{c}$ at line 6) is considered a cheap \gls{HE} operation as well as the homomorphic subtractions at lines 4 and 6. Raising to the power of 2 (Line 4) and $p$ (Line 6) require 1 and $\log p$ multiplications, respectively. Finally, the heaviest operations used are the \invsqr (Line 5) and \inv (Line 8) that use the Goldschmidt approximations (\Cref{sec:goldschmidt}).
Denote by $M_{\inv}$ (resp. $D_{\inv}$) and $M_{\invsqr}$ (resp. $D_{\invsqr}$) the number of multiplications (resp. multiplication depth) required by the \inv and \invsqr approximations over one ciphertext, respectively. Then \cutmaxhe executes
\begin{equation}
T \cdot (M_{\invsqr} + 5 + \log p) + M_{\inv} + 1
\end{equation}
multiplications with a total depth of
\begin{equation}
T \cdot (D_{\invsqr} + 5 + \log p) + D_{\inv} + 1 \,
\end{equation}
and $O(T \log s)$ rotations.
\subsection{HE-Friendly Nucleus Sampling} \label{method:he_nucelus}
In text generation, language models produce a probability distribution over a large vocabulary at each time step. The next token is selected from this distribution using a decoding strategy. The simplest strategy is greedy decoding, where the token with the highest probability (\argmax) is chosen. More commonly, however, high-quality text generation relies on stochastic decoding methods that introduce controlled randomness to improve fluency, diversity, and coherence.
Two widely used stochastic decoding methods are \topKSampling \cite{Fan_2018} and $\operatorname{nucleus}$ $
(\operatorname{top-p})$ \sampling \cite{Holtzman_2020}. In \topKSampling, only the $k$ most probable tokens are considered, and one is sampled according to its probability. In \nSampling, the model selects the smallest set of top-ranked tokens whose cumulative probability exceeds a threshold $p$, and samples from this adaptive set. These techniques prevent degeneration and repetition, making them essential in modern LLM-based systems.
\vspace{-3pt}
\paragraph{Sampling under HE.}
As established, our \cutmax algorithm is polynomial and thus compatible with HE; we confirm its correctness empirically in \Cref{sec:emp_res}. Whereas \cutmax handles deterministic decoding, we next extend it to stochastic sampling.
The first naive attempt is to use the \textit{inverse transform sampling} method, which generates random samples from a given probability distribution using its \gls{CDF}: Given an encrypted \gls{PDF} $\XX$ with \gls{CDF} $F_\XX$, the method samples a plaintext value $\UU \sim \mathrm{Uniform}(0,1)$, and returns $x = F_\XX^{-1}(u)$, where $F_\XX^{-1}$ is the inverse \gls{CDF} ($\mathrm{iCDF}$), and $x$ is a sample from $\XX$. This sampling method is inefficient as it requires at least one \gls{HE} heavy comparison (\Cref{sec:approximations}) to sample an element.
\vspace{-3pt}
\paragraph{Using Gumbel Distribution.} We can improve upon the above by using the $\mathrm{Gumbel}$ distribution. We begin by recalling the relationship between the $\mathrm{Gumbel}$ and $\mathrm{Uniform}$ distributions, as well as the \emph{Gumbel-max trick}.
\begin{definition}\cite{gumbel1954statistical}
Let $\UU \sim{} \mathrm{Uniform}(0, 1)$, then $\GG = -\log(-\log \UU)$ has
a Gumbel distribution $\GG \sim{}
\mathrm{Gumbel}(\mu=0, \beta=1)$.
\end{definition}
\begin{lemma}[Gumbel-Max Trick \cite{ maddison2014}]\label{lem:gumbel}
For $X \in R^k$, let $\GG_i \sim \mathrm{Gumbel}(0,1)$ be independent for each $i$. Then the random variable $\YY = \argmax_{i} \left( X_i + \GG_i \right)$
follows a categorical distribution with probabilities given by,
\[\mathbb{P}(Y = i) = \frac{e^{X_i}}{\sum_j e^{X_j}} = \softmax(X_i) \,.
\]
\end{lemma}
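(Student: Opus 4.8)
The plan is to compute $\mathbb{P}(\YY = i) = \mathbb{P}(X_i + \GG_i > X_j + \GG_j \text{ for all } j \neq i)$ directly, by conditioning on the value attained by the $i$-th perturbed score and integrating against the law of the Gumbel noise. The natural starting point is the closed form of the Gumbel law: since $\GG_i \sim \mathrm{Gumbel}(0,1)$ has CDF $\exp(-e^{-g})$, the shifted variable $Z_i := X_i + \GG_i$ follows $\mathrm{Gumbel}(X_i, 1)$, with CDF $F_{Z_i}(z) = \exp(-e^{-(z-X_i)}) = \exp(-e^{X_i}e^{-z})$ and density $f_{Z_i}(z) = e^{X_i}e^{-z}\exp(-e^{X_i}e^{-z})$. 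I would first establish these two expressions, either via the change-of-variables formula for the shift or by recalling the standard Gumbel CDF/PDF.

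The key step is then to write, using independence of the $\GG_i$ and the law of total probability,
\[
\mathbb{P}(\YY = i) = \int_{-\infty}^{\infty} f_{Z_i}(z) \prod_{j \neq i} F_{Z_j}(z)\,dz,
\]
where the integrand is the density of $Z_i = z$ times the probability that every competing $Z_j$ lies below $z$. Substituting the exponential forms collapses the product over $j \neq i$ into $\exp(-e^{-z}\sum_{j \neq i} e^{X_j})$, and combining it with the $\exp(-e^{X_i}e^{-z})$ factor inside $f_{Z_i}$ yields a single exponential $\exp(-e^{-z}S)$ with $S := \sum_j e^{X_j}$. The integrand then factors cleanly as $e^{X_i}\,e^{-z}\exp(-S e^{-z})$.

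The final computation is a one-line substitution $u = e^{-z}$ (so $e^{-z}\,dz = -du$, and the limits $(-\infty,\infty)$ map to $(\infty,0)$), which turns the integral into $e^{X_i}\int_0^\infty e^{-Su}\,du = e^{X_i}/S$, giving exactly $\mathbb{P}(\YY=i) = e^{X_i}/\sum_j e^{X_j} = \mathrm{Softmax}(X_i)$. The density manipulations and this integral are routine; the only genuine subtlety worth flagging is well-definedness of the $\argmax$. I would note that because the $Z_i$ are continuous and independent, ties occur with probability zero, so $\argmax_i(X_i+\GG_i)$ is almost surely unique and the $k$ events $\{\YY = i\}$ partition the sample space up to a null set. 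This is what makes the event $Z_j < z$ inside the integral unambiguous (strict versus non-strict inequality is immaterial) and guarantees that the computed probabilities sum to $1$.
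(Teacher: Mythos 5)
Your derivation is correct and complete: the shift of a standard Gumbel by $X_i$ gives the CDF $\exp(-e^{X_i}e^{-z})$ and density $e^{X_i}e^{-z}\exp(-e^{X_i}e^{-z})$, the independence factorization into $\int f_{Z_i}(z)\prod_{j\neq i}F_{Z_j}(z)\,dz$ is the right decomposition, and the substitution $u=e^{-z}$ closes the integral to $e^{X_i}/\sum_j e^{X_j}$. Your remark about almost-sure uniqueness of the argmax is also the right point to flag. Note, however, that the paper offers no proof of this lemma at all --- it is stated as a known result and attributed to the cited reference on the Gumbel-max trick --- so there is no in-paper argument to compare against; what you have written is the standard textbook derivation and it is sound.
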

Using the above, our sampling method is as follows: for a given encrypted logits array $\XX \in \RR^n$, sample unencrypted $\UU \in \mathrm{Uniform}(0,1)^n$ and convert it to $\GG \sim{} \mathrm{Gumbel}(0,1)^n$, then compute $\cutmax(\XX+\GG)$ over \gls{HE}, which only requires fast additions and one call to \cutmax.
\vspace{-3pt}
\paragraph{Efficient Nucleus Sampling.}
While our Gumbel-based method allows efficient sampling over \gls{HE}, we can do better. We aim to develop an algorithm inspired by \cite{maddison2014} that will enable us to sample only from the top of the distribution, as in \nSampling, so that words from the tail of the logits distribution could not be sampled, and potentially disrupt the coherence of the generated text. To this end, we aim to sample $\GG$ from a \gls{PDF} that satisfy the condition $\PP(\GG > p) = p$. Luckily, the $\mathrm{Beta}(\alpha, \beta)$ distribution has this property when $\beta=1$.
\begin{lemma}\label{lem:beta}
Let $\GG \sim{} \mathrm{Beta}(\alpha,1)$, then $\PP(\GG > p) = p$ for $\alpha = \frac{\ln(1-p)}{\ln(p)}$ \,.
\end{lemma}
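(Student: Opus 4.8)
The plan is to exploit the especially simple form that the $\mathrm{Beta}(\alpha,1)$ density takes when the second shape parameter equals $1$, which collapses the entire claim into a one-line survival-function computation. First I would write the density of $\mathrm{Beta}(\alpha,\beta)$ as $f(x)=x^{\alpha-1}(1-x)^{\beta-1}/B(\alpha,\beta)$ on $[0,1]$ and specialize to $\beta=1$, so that the factor $(1-x)^{\beta-1}$ becomes $1$. The one substantive simplification is the normalizing constant: using $B(\alpha,1)=\Gamma(\alpha)\Gamma(1)/\Gamma(\alpha+1)=1/\alpha$ (since $\Gamma(\alpha+1)=\alpha\Gamma(\alpha)$ and $\Gamma(1)=1$), the density reduces to $f(x)=\alpha\,x^{\alpha-1}$ for $x\in[0,1]$.

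Next I would integrate to get the CDF in closed form, $F(x)=\int_0^x \alpha\,t^{\alpha-1}\,dt=x^{\alpha}$ on $[0,1]$, and hence the survival function $\PP(\GG>p)=1-F(p)=1-p^{\alpha}$. Imposing the target condition $\PP(\GG>p)=p$ yields $1-p^{\alpha}=p$, i.e.\ $p^{\alpha}=1-p$. Taking logarithms—legitimate because $p\in(0,1)$ guarantees that $\ln p$ and $\ln(1-p)$ are both defined and $\ln p\neq 0$—gives $\alpha\ln p=\ln(1-p)$, and solving for $\alpha$ produces exactly $\alpha=\ln(1-p)/\ln p$, as claimed. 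I would also note in passing that this value is genuinely positive (both logarithms are negative for $p\in(0,1)$), so the resulting $\mathrm{Beta}(\alpha,1)$ distribution is well defined.

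Because the argument is a direct computation, I do not expect any genuine analytic obstacle; the only point needing a little care is the evaluation $B(\alpha,1)=1/\alpha$, after which the CDF and the defining equation follow mechanically. If I wanted to avoid invoking the Beta function altogether, I could instead verify directly that $\int_0^1 \alpha\,x^{\alpha-1}\,dx=1$, confirming $\alpha\,x^{\alpha-1}$ is a valid density on $[0,1]$, and then carry out the identical CDF and survival-function steps.
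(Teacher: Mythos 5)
Your proof is correct and follows essentially the same route as the paper: both reduce the claim to the fact that the $\mathrm{Beta}(\alpha,1)$ CDF is $F(x)=x^{\alpha}$, impose $1-p^{\alpha}=p$, and solve $p^{\alpha}=1-p$ for $\alpha$. The only difference is that you derive the CDF from the density and the identity $B(\alpha,1)=1/\alpha$, whereas the paper states the CDF directly; this is a presentational detail, not a different argument.
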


\begin{proof}
The \gls{CDF} of $\mathrm{Beta}(\alpha, 1)$ is $F_{\mathrm{Beta}(\alpha, 1)}(x) = x^{\alpha}$.
The requirement $\PP(\GG > p) = p$ holds if and only if $F_{\mathrm{Beta}(\alpha, 1)}(p) = 1 - p$. Thus,
$p^{\alpha} = 1 - p$, or alternatively, $\alpha = \frac{\ln(1-p)}{\ln(p)}$.
\end{proof}
\begin{observation}
\label{obs:nucleus}
\Cref{lem:beta} targets $\PP(\GG > p) = p$. We can generalize this result by asking $\PP(\GG > p) = q$ for any $q \in [0,1]$, which yields $\alpha = \frac{\ln(1-q)}{\ln(p)}$
\end{observation}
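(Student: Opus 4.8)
The plan is to mirror the short derivation used for \Cref{lem:beta}, since \Cref{obs:nucleus} is simply the one-parameter generalization in which the target tail probability $p$ on the right-hand side is replaced by an arbitrary $q \in [0,1]$. I would begin by invoking the closed form of the CDF of $\mathrm{Beta}(\alpha,1)$, namely $F_{\mathrm{Beta}(\alpha,1)}(x)=x^{\alpha}$ on $[0,1]$, which already appears in (and is the only fact used by) the proof of \Cref{lem:beta}, so it can be taken as established.

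With this in hand, the entire argument reduces to a single complementary-CDF evaluation followed by solving a scalar equation. I would write the tail probability as
\begin{equation}
\PP(\GG > p) = 1 - F_{\mathrm{Beta}(\alpha,1)}(p) = 1 - p^{\alpha},
\end{equation}
and then impose the generalized requirement $\PP(\GG > p) = q$. This gives $p^{\alpha} = 1 - q$, which I would solve by taking logarithms: $\alpha \ln p = \ln(1-q)$, hence $\alpha = \frac{\ln(1-q)}{\ln p}$. Setting $q = p$ recovers \Cref{lem:beta} as a special case, which serves as a consistency check on the generalization.

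There is essentially no hard step here; the only point worth a brief sanity check is the admissibility of the solution. I would note that for $p \in (0,1)$ we have $\ln p < 0$, and for $q \in (0,1)$ we have $\ln(1-q) < 0$, so the quotient is strictly positive and yields a valid shape parameter $\alpha > 0$ for the $\mathrm{Beta}(\alpha,1)$ family. The boundary cases $q \in \{0,1\}$ degenerate (forcing $\alpha \to 0$ or $\alpha \to \infty$) and are best excluded or treated as limits. Since $x \mapsto x^{\alpha}$ is strictly monotone on $[0,1]$, the equation $p^{\alpha} = 1-q$ has a unique solution in $\alpha$, so no uniqueness argument beyond this monotonicity is required. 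The main ``obstacle,'' if any, is purely expository: making clear that this is an immediate corollary and stating the interpretive meaning, namely that tuning $\alpha$ lets one place an arbitrary mass $q$ above the nucleus threshold $p$, rather than the fixed coupling $q=p$ forced by \Cref{lem:beta}.
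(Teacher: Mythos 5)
Your proposal is correct and matches the paper's (implicit) argument exactly: the observation is proved by the same one-line computation as \Cref{lem:beta}, replacing the target $1-p$ with $1-q$ in $F_{\mathrm{Beta}(\alpha,1)}(p)=p^{\alpha}$ and solving for $\alpha$. The added positivity and boundary-case checks are sound but not part of the paper's treatment.
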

\begin{algorithm}[t]
\caption{Nucleus One-Shot Sampling}
\label{alg:nucleus}
\begin{algorithmic}[1]
    \small
    \setlength{\baselineskip}{1.5em}
    \Require Logits $\heenc{X}$, $\XX \in \RR^n$; Nucleus mass $p\in(0,1)$.
   
    \Ensure Encrypted one‐hot sample $\heenc{z}$.
   
    \State $\alpha = \frac{\ln(1-p)}{\ln(p)}$
   
    \State $\beta = 1$
    \State $\UU \gets \mathrm{Uniform}(0,1)^n$
   
    \State $\GG = \left(F^{-1}_{\mathrm{Beta}(\alpha,\beta)}(\UU_i)\right)_{i \le 1,\ldots,n}$
    \Comment{Unencrypted noise}
   
    \State $\heenc{\tilde \XX} = \heenc{\XX} \oplus \GG$
    \Comment{Perturb logits}
   
    \State \Return $\cutmax(\heenc{\tilde \XX})$
    \end{algorithmic}
\end{algorithm}
\Cref{alg:nucleus} introduces a single‐shot, nucleus-restricted sampling routine based on \Cref{lem:beta}. It starts by draw $\GG\sim{}\mathrm{Beta}(\alpha, \beta)$ (Line 4). Subsequently, we form the ``cut-noise'' logits at Line 5, so that only those tokens whose cumulative mass exceeds $p$ can potentially become the maximum. Finally, the algorithm executes a single invocation of \cutmax at Line 6 to
homomorphically recover the one-hot encrypted sample from the nucleus set.
To summarize, we introduced a one-shot nucleus sampling algorithm whose polynomial formulation requires only a single \cutmax evaluation under HE, while maintaining exact distributional behavior and full HE compatibility.

\subsection{Differentiability for Sequence Optimization}
A key property of \cutmax and our \nSampling is their composition from polynomial operations (additions, multiplications, means/variances, \invsqr, and elementwise powers). In the plaintext domain (Algorithm~\ref{alg:cutmax}), these operations are naturally differentiable without any approximations: for example, the \inverseSqrt \(1/\sqrt{x}\) is \(C^\infty\) smooth and differentiable for \(x > 0\), enabling exact gradient computation via automatic differentiation. This contrasts with prior tournament/league methods, which inherently rely on the discontinuous SIGN function—even in plaintext, SIGN requires smoothing or approximation for differentiability, often leading to high-degree polynomials that can introduce gradient instabilities.

In HE, both approaches use polynomial approximations, but CutMax approximates smooth functions like \(1/\sqrt{x}\), which can be achieved with lower-degree polynomials compared to approximating the sharp discontinuity of SIGN (as evidenced by the lower multiplicative depths in Table~\ref{tab:golds-sign}). Theoretically, polynomial composition preserves differentiability, providing a rigorous foundation for exact end-to-end gradients through the full forward pass.

This stands in contrast to non-differentiable ops like discrete \argmax or \sampling, which require approximations like straight-through estimators (STE; \citet{Bengio2013EstimatingOP}) during backpropagation—where gradients are passed `straight through' the hard \argmax as if it were a soft identity. STE introduces bias and can lead to optimization instabilities in sequence tasks (e.g., reinforcement learning from human feedback or sequence fine-tuning).

Theoretically, our methods serve as drop-in polynomial surrogates: during training, replace hard decoding with \cutmax (for greedy) or \nSampling based upon \cutmax (for \sampling), and compute exact gradients end-to-end. For instance, in a sequence loss \(\mathcal{L}(\mathbf{y}, \hat{\mathbf{y}})\) where \(\hat{\mathbf{y}}\) is generated via decoding, the gradient \(\partial \mathcal{L} / \partial \mathbf{x}\) (w.r.t.\ input logits \(\mathbf{x}\)) flows precisely through the polynomial chain. While we leave empirical validation (e.g., on BLEU/ROUGE improvements in fine-tuning) to future work, the theoretical differentiability—rooted in smooth plaintext operations and polynomial preservation—positions \cutmax as a promising alternative for stable, gradient-based sequence optimization.

\section{Empirical Evaluation}\label{sec:emp_res}
To evaluate our methods, we use the \texttt{ccdv/arxiv-summarization}~\cite{cohan-etal-2018-discourse} dataset of paper abstracts, sampling $N=100$ articles unless otherwise specified. We report results on realistic LLM logits from models including QWEN2.5-0.5B ($|\mathcal{V}|\approx150\mathrm{K}$), Mistral-7B ($|\mathcal{V}|\approx33\mathrm{K}$), and GPT-2 ($|\mathcal{V}|\approx50\mathrm{K}$).
\subsection{Plain-Text CutMax}
We evaluate our \cutmax\ in the unencrypted domain on the QWEN2.5-0.5B model (vocabulary size $|\mathcal{V}|\approx150\mathrm{K}$), using the \texttt{ccdv/arxiv-summarization}~\cite{cohan-etal-2018-discourse} dataset of paper abstracts. We sampled $N=100$ articles and, for each, retrieved the next‐token logits via one‐step greedy decoding (\textit{greedy argmax} on the plaintext logits), yielding 100 ground‐truth tokens. For each prompt we then ran \cutmax\ with $T\in\{2,3,4,5\}$ iterations, grid‐searching over odd powers $p\in\{7,9,\dots,19\}$ and scaling factors $c\in\{3,5,\dots,39\}$ to find the smallest \emph{$(p,c)$} that still achieves perfect next‐token recovery.
\paragraph{Convergence vs.\ \#iterations.}
\Cref{tab:empirical_T} summarizes, for each $T$, the best $(p,c)$ choice, the resulting average maximum normalized score after $T$ iterations,
$
\argmaxi = Z_i / \sum_{j=1}^\mathcal{V} Z_j
$,
and the recovery rate (fraction of times $\argmaxiT$ matches the true token). The results show rapid convergence: even with just $T=2$ iterations, \cutmax\ concentrates $\approx95\%$ of the mass on the correct token. By $T=4$, the recovery exceeds $99.99\%$, and by $T=5$, it achieves exact recovery in all cases.
\begin{table}[t]
  \centering
  \small
  \setlength\tabcolsep{3pt}
  \renewcommand{\arraystretch}{0.98}
  \caption{Convergence of \cutmax\ on QWEN2.5-0.5B ($|\mathcal V|\approx150$K) over 100 arXiv abstracts. We report the mean of the per-prompt $\argmaxiT$, the best (i.e.\ largest) $\argmaxiT$ observed across the 100 prompts with exact next-token recovery of 100\% in all cases.}
  \label{tab:empirical_T}
  \begin{tabular}{@{}cccc@{}}
    \toprule
    Iter. $(T)$
    & Best $(p,c)$
    & $\!\mathbb{E}[\argmax_i]\!\uparrow$
    & $\max\!\bigl(\argmax_i\bigr)\!\uparrow$ \\
    \midrule
     $2$ & $(19,5)$ & $0.952$ & $0.987$ \\
     $3$ & $(19,27)$ & $0.987$ & $0.994$ \\
     $4$ & $(13,15)$ & $0.999996$ & $1.000$ \\
     $5$ & $(9,15)$ & $1.000$ & $1.000$ \\
    \bottomrule
  \end{tabular}
  \vspace{-7pt}
\end{table}
\paragraph{Sparsity and Accuracy.}
Across all 400 \cutmax\ invocations (100 prompts × 4 values of $T$), we achieve 100\% next‐token recovery. Moreover, after convergence (e.g., $T\ge4$), the distribution is effectively one‐hot: out of $\approx150\mathrm{K}$ vocabulary items, only the true token carries non-negligible mass.
These results confirm the theoretical analysis of \Cref{sec:method}: \cutmax\ drives a highly skewed distribution in just a handful of 
iterations, making it an excellent practical choice for homomorphic greedy decoding under CKKS‐style encryption.
\begin{table*}[h!]
  \centering
  \small
  \setlength\tabcolsep{3pt}
  \renewcommand{\arraystretch}{0.9}
  \caption{\textbf{Homomorphic $\argmax$ latency and accuracy.}
We benchmark \cutmax\ on two realistic LLM logit tensors—Mistral‑7B
($\vert\mathcal{V}\vert\!=\!32{,}768$) and Qwen‑2.5‑0.5B
($\vert\mathcal{V}\vert\!=\!151{,}936$)—using 100 prompts from the \textsc{arXiv}‑
summarisation set.
Hyper‑parameters are fixed to T=4 with p=[16,4,4,6] and c=[5,3,3,3] due to chain index optimization and numerical stability. We report the mean of the average of maximum values, the average latency, the accuracy of slot predictions, the arrays’ sizes, and the device used.}
  \label{tab:empirical_argmax}
  \begin{tabular}{@{}l l c c c c c l@{}}
    \toprule
    Algorithm
    & Dataset
    & Model
    & Array
    & $\mathbb{E}[\max_i Z_i]\!\uparrow$
    & Avg.\ Latency $\downarrow$
    & Accuracy $\uparrow$
    & Devices \\
    & & & Length & & (secs) & & \\
    \midrule
    \cutmax & arXiv & Mistral7B-v0.3 & 32,768 & $0.994$ & $3.373$ & $100.0\%$ & $1\times$ NVIDIA H100 \\
    \cutmax & arXiv & Qwen2.5 & 151,936 & $0.989$ & $4.607$ & $100.0\%$ & $1\times$ NVIDIA H100 \\
    \midrule
    \multicolumn{8}{@{}l}{\textbf{Baselines}} \\
    Nexus & GLUE & BERT & 30,522 & \textemdash & $79.36^*$ & \textemdash & $4\times$ Tesla A100 \\
    Nexus & GLUE & Llama-3 & 128,256 & \textemdash & $162.8^*$ & \textemdash & $4\times$ Tesla A100 \\
    \bottomrule\\
  \end{tabular}\\
  $^*$ Nexus \cite{transformerArgmax} Table 2: "We batched 32 inputs in total ... Runtime is the amortized latency of each input.", originally reported $(2.48{,}5.09)$ when multiplied by 32 we get $(79.36, 162.8)$, respectively.
\end{table*}
\subsection{HE CutMax}
We evaluated our FHE \cutmax implementation under the CKKS scheme using HELayers~\citep{helayers}. Experiments were conducted on LLM logits from two models: Qwen-2.5-0.5B and Mistral-7B, with vocabulary sizes of 151,936 and 32,768, respectively. For each model, we used 100 examples and set following hyperparameters:
$c = [5,3,3,3],\quad
p = [16, 4, 4, 6], \quad
T = 4$, 
chosen to optimize the CKKS chain index and ensure numerical stability. We report the mean of the per-example maximum values, average latency, and slot prediction accuracy. As a baseline, we compare our method against the recently proposed HE-friendly \argmax algorithm by~\citet{zhang2024secure}. Results are reported in \Cref{tab:empirical_argmax}, demonstrating that our method is both accurate and efficient. From an accuracy perspective, our method does not alter the model's predictions in any of the tested cases, achieving 100\% accuracy. For efficiency benchmarking, our method significantly reduces the latency of \argmax compared to Nexus~\cite{transformerArgmax}, by several orders of magnitude for comparable vocabulary sizes. Specifically, for a vocabulary size of approximately 30K, our method reduces latency from 79.56 seconds to 3.373 seconds. For a larger vocabulary of approximately 150K, our method reduces latency from 162.8 to 4.607. %
Note that we chose Nexus as our baseline even though there are newer papers on LLM inference such as \cite{zimerman2024power, powerformer, moai, thor, kei2025shaft}, because these did not include an \argmax benchmark or did not utilize \argmax at all.

\subsection{Nucleus Sampling}
\Cref{fig:sampling_example} qualitatively demonstrates our method's effectiveness, showing a sharply truncated distribution that confirms all probability mass remains within the top-$p$ nucleus.
To verify that our one‐shot nucleus sampler never selects tokens outside the intended top-$p$ set, we measured the \emph{violation rate} as the fraction of draws falling outside the nucleus, over $S=100$ prompts, with $N=1000$ samples per prompt and $p=0.9$. We compare standard Gumbel-Max sampling against our Nucleus($\beta$--cut) method. Results are summarized in \Cref{fig:nucleus_violation}: Gumbel-Max violates the top-$p$ constraint in $9.3\%\pm2.4\%$ of draws, whereas Nucleus ($\beta$‐cut) has \emph{zero} violations.
\begin{figure}[!b]
  \centering
  \includegraphics[width=1.0\linewidth]{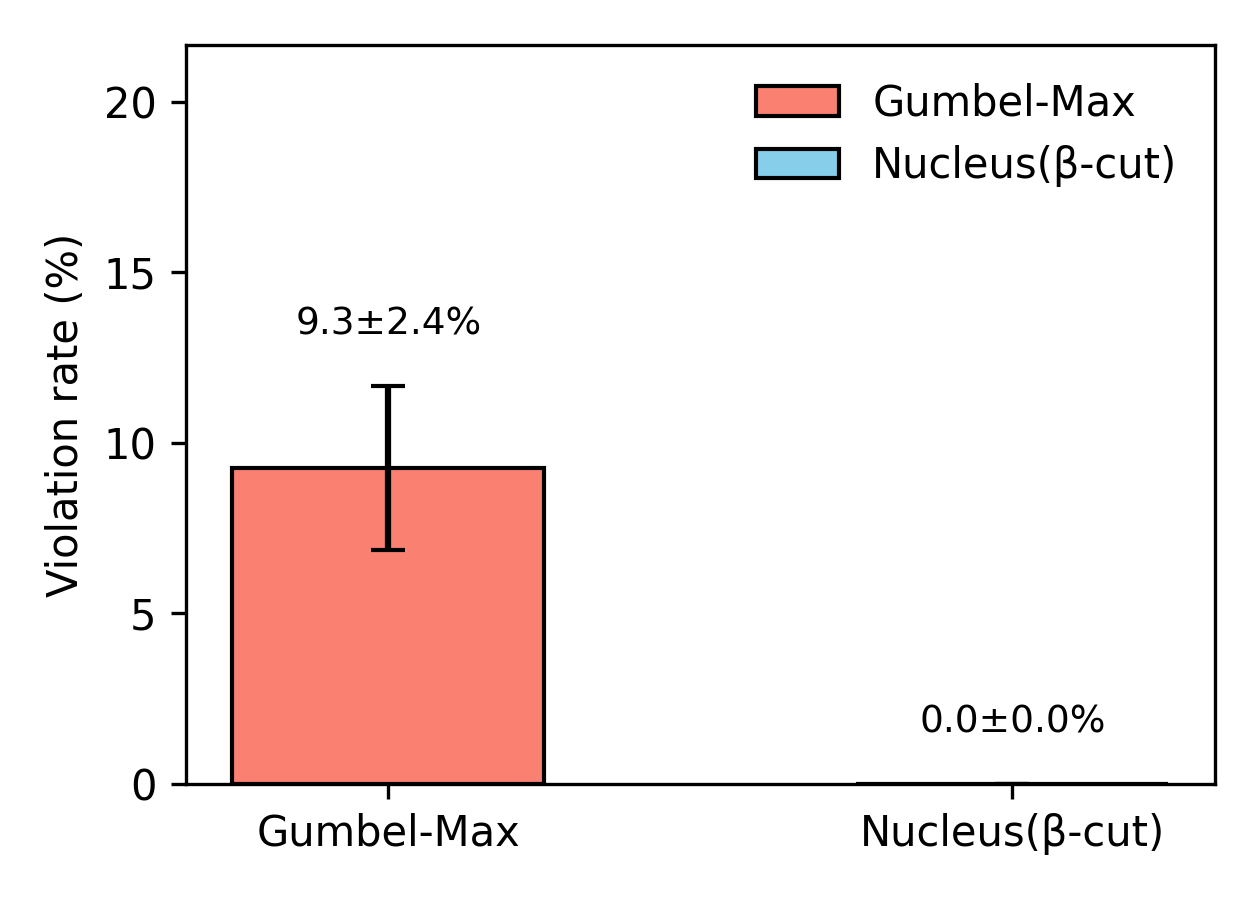}
  \caption{\textbf{Violation Rates in Nucleus Sampling Methods}: Violation rate (\%) of sampling outside the top-$p$ nucleus for Gumbel-Max and Nucleus($\beta$-cut), averaged over 100 prompts with 1000 draws each ($p=0.9$). Error bars denote one standard deviation. }
  \label{fig:nucleus_violation}
\end{figure}
\section{Conclusions}\label{sec:conc}
We solve a critical bottleneck in privacy-preserving AI: adapting LLM decoding for homomorphic encryption, where non-polynomial operations like \argmax and \sampling are computationally prohibitive. We introduce \textbf{CutMax}, a novel, HE-friendly and theoretically grounded \argmax algorithm that replaces slow, comparison-based methods with an efficient iterative polynomial process. Crucially, \cutmax achieves 100\% accuracy and executes in just a few seconds on large vocabularies ($|\mathcal{V}|\!\approx\!150\mathrm{K}$), making secure greedy decoding practical for LLM inference for the first time. Building on this, we present the \textbf{first HE-compatible nucleus (top-p) sampling method}, which leverages \cutmax with the idea of noisy inverse transform sampling to provably restrict sampling to the desired token set. Together, these fully polynomial algorithms provide a complete and efficient framework for both greedy and stochastic decoding over encrypted data, addressing a major barrier to the deployment of secure LLMs in real-world applications.
\section*{Limitations}\label{sec:limit}
Our work focuses on optimizing the decoding stage of encrypted token generation. While this is a critical component, it does not capture the full computational cost of encrypted inference. We view this work as a step toward narrowing the still-substantial gap between encrypted and plaintext generation, and we believe that combining our techniques with future advances, such as attention mechanisms, can help enable practical, privacy-preserving generative AI.
Our implementation uses the CKKS scheme, which is well-suited for approximate arithmetic and widely adopted in encrypted machine learning. While the approach is conceptually compatible with other HE schemes, extending it may require engineering adaptations. As the field evolves, e.g., through new HE schemes or hardware support, our techniques can be adapted accordingly.
Our paper focuses on improving the efficiency of LLM decoding methods under encryption. Therefore, it preserves the behavior of the underlying generative model, including any biases or factual inaccuracies it may contain. While encryption protects user data, it also limits visibility into model behavior, making post-hoc filtering and auditing more challenging. We view this as a crucial direction for future research in secure and responsible AI.
\begin{figure}[!h]
  \centering
  \includegraphics[width=0.92\columnwidth]{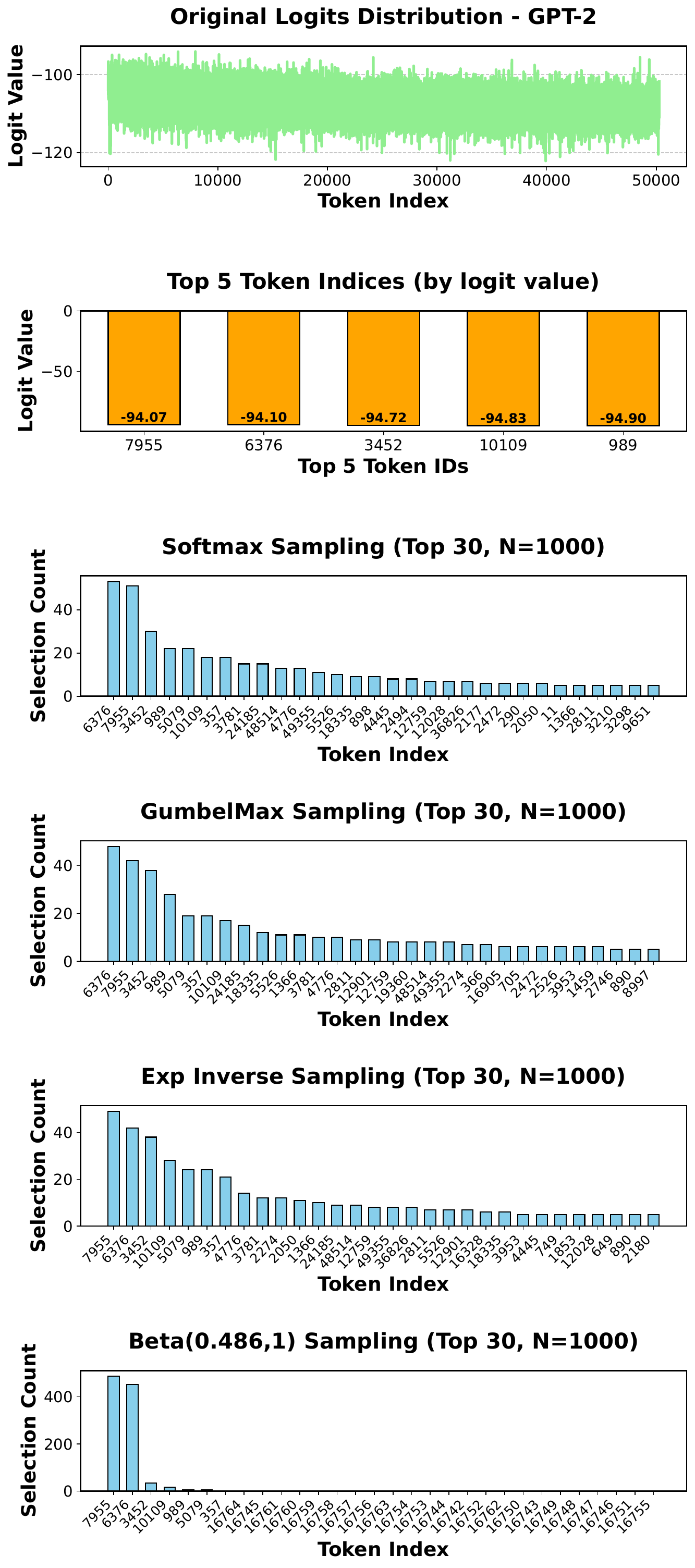}
    \caption{\textbf{Comparison of Sampling Methods on GPT-2 Last-Token Decoding:}
The top strip shows the complete 50,257‑dimensional logit vector (only its outline is visible at this scale), and the second strip zooms in on the five largest logits. The three lower strips plot, for 1,000 repeated draws, histograms of the 30 most frequently chosen token indices produced by four different samplers: standard \softmax\ at temperature 1, the classical Gumbel‑Max trick, inverse‑transform sampling on the normalized exponentials (“Exp‑inverse”), and our one‑shot \emph{Beta‑cut nucleus} sampler, which perturbs each logit with i.i.d.\ $G_i\!\sim\!\mathrm{Beta}(\alpha,1)$ noise using $\alpha=0.486$ computed from \Cref{obs:nucleus} for $(p,q)=(0.9,0.95)$. Whereas the first three methods still allocate non‑negligible mass to tail tokens, the Beta‑cut histogram is sharply truncated, confirming that all probability remains inside the top‑$p$ nucleus exactly as required by the HE‑friendly design of \Cref{method:he_nucelus}.}
  \label{fig:sampling_example}
\end{figure}
\section*{Ethical Considerations}\label{sec:ethical}
Our work aims to enhance privacy in text generation by enabling efficient LLM decoding on encrypted inputs. This can reduce the exposure of sensitive user data when interacting with large language models. However, our method preserves the behavior of the underlying model, including any biases, inaccuracies, or harmful content it may produce. Moreover, since encrypted inference limits visibility into intermediate states, it may complicate post-generation auditing and filtering. We view secure model auditing and bias mitigation as important complementary directions for future research.
\bibliographystyle{acl_natbib}
\bibliography{custom}
\appendix
\onecolumn
\section*{Appendix}
\section{Threat model}\label{app:threat}
There are two commonly used threat-models for secure inference and generative AI for LLMs over HE: 1) using encrypted weights; or 2) using encrypted input samples (queries). When considering a 2-party scenario with \gls{HE}, it is commonly assumed that one party is the (untrusted, semi-honest) server who holds the model (encrypted or not) and the other party is the data-owner who performs the query and would like to avoid revealing the query to the server. The decision of whether the server holds the model encrypted or not depends on whether a third-party the model-owner agrees to share the data with the server. Our proposed solution is orthogonal to the above decision, which eventually only affects latency. All communications are performed using TLS 1.3 and privacy attacks such as model extraction attack and membership inference attacks require an extra mechanism such as differential privacy, which is also orthogonal to this work. For brevity, we have decided to refer the reader to \cite{he4ds}[Chapter 3] that further describes this security model.
\section{Approximations for \gls{HE}}
\label{sec:approximations}
Since HE supports only polynomial operations, functions like $\max$, $\sign$, $\inv$, and $\invsqr$ must be approximated. A common formulation for $\max(a,b)$ is:
\begin{equation}\label{eq:max}
    \max(a,b) = \dfrac{a+b}{2} + \dfrac{\sign(a-b) \cdot (a-b)}{2},
\end{equation}
where $\sign(x) = 1$ when $x > 0$ and $-1$ otherwise. An approximation of \sign is given, e.g., in \cite{asor}. To make the results of \sign suitable as an indicator function (as in \Cref{eq:max}), it is common to translate the \sign range from $\{-1,1\}$ to $\{0,1\}$ using $\frac{1 + \sign(x)}{2}$.
The $\max$ and \sign functions are used by prior-art \argmax implementations. In contrast, \cutmax uses \inv and \invsqr approximations.
\subsection{Goldschmidt}\label{sec:goldschmidt}
The \textit{Goldschmidt} inverse algorithm \cite{goldschmidt} is an iterative process for computing the function $f(x)=\frac{1}{x}$. \cite{goldapprox} showed that this algorithm can be used to approximate inputs in the range $[0,2]$ effectively using the equation
\begin{equation}
f(x) = \prod_{i=0}^{d} \left(1+(1-x)^{2^i}\right)
\end{equation}
Specifically, they proved that for $( x \in \left[\frac{1}{2^n}, 1\right])$, it requires $(d = \Theta(\log \alpha + n) )$ iterations to converge, with an error bound of $(2^\alpha )$. \cite{panda} showed an efficient implementation for the Goldschmidt inverse square root (\invsqr) algorithm over \gls{HE}. Subsequently, \cite{asor} reported improved measurements of the inverse and \invsqr implementations under CKKS. \Cref{tab:golds-sign} repeats some benchmarks for the \sign and \invsqr functions at different approximation levels ($\alpha$). One immediate conclusion is that the \invsqr implementation is $(1.5 - 2.1 \times )$ faster compared to the \sign implementation for the same level of errors. One reason is that \invsqr has a lower \textit{multiplication depth} (defined below) and thus requires fewer bootstraps.
\paragraph{Multiplication Depth.} In CKKS, a ciphertext must be refreshed via a costly \textit{bootstrap} operation after every $L$ sequential multiplications, where $L$ is a scheme parameter. Since bootstrapping is significantly slower than other operations, a key optimization goal is to minimize its use. The longest chain of multiplications in a function—its \textit{multiplication depth}, is therefore a critical target for reduction.
\begin{table}[ht!]
\centering
\caption{A sample benchmarks of \cite{asor} conducted on AMD EPYC 7502-32 CPU, using Microsoft Seal, poly degree of $2^{17}$, and scale $\Delta =2^{40}$.}
\label{tab:golds-sign}
\begin{tabular}{|l|l|c|c|c|c|}
\hline
\textbf{Algorithm} & \textbf{$\alpha$} & 7 & 9 & 11 & 13 \\
\hline
\multirow{4}{*}{\sign}
& Iterations & 7 & 9 & 11 & 12 \\
& Multiplication depth & 14 & 18 & 22 & 24 \\
& Time (s) & 6.3 & 11.7 & 20.2 & 26.9 \\
\hline
\multirow{4}{*}{\invsqr}
& Iterations & 5 & 6 & 7 & 8 \\
& Multiplication depth & 10 & 12 & 14 & 16 \\
& Time (s) & 4.1 & 6.6 & 9.8 & 12.8 \\
\hline
\end{tabular}
\end{table}
\section{Previous, Comparison-Based Methods for argmax over HE}
\label{sec:prev-methods}

\Cref{fig:he-argmax} visualizes the two canonical comparison-based approaches for encrypted $\argmax$: the \emph{tournament} method (\Cref{fig:tour}) and the \emph{league} method (\Cref{fig:league}). Both rely on repeated encrypted comparisons implemented via polynomial approximations of $\sign(\cdot)$.

In the tournament design, approximately $n{-}1$ pairwise comparisons are arranged into $\log_2 n$ sequential stages. At each stage, the number of candidates is halved until only the maximum remains. While asymptotically efficient, this design requires $\log n$ sequential $\sign$ evaluations, each incurring costly rotations and bootstraps.

In the league design, every pair of inputs is compared, and results are accumulated so that the true maximum is the only element with score $n{-}1$. This requires $n$ sequential rounds of comparisons. Although SIMD parallelism allows all $n$ pairwise comparisons within a round to be executed in parallel slots, the \emph{sequential} round structure still dominates, leaving the method quadratic overall.

As summarized in \Cref{tab:golds-sign}, polynomial $\sign$ approximations are substantially deeper and slower than inverse-square-root or division. Thus, even when accelerated with SIMD, both the tournament and league methods remain inefficient at large $n$. In practice, this has limited published HE $\argmax$ implementations either to small input sizes or to systems that decrypt logits before $\argmax$.
\begin{figure}[t!]
    \centering
    \begin{subfigure}[b]{0.48\textwidth}
        \centering
        \includegraphics[width=0.95\linewidth]{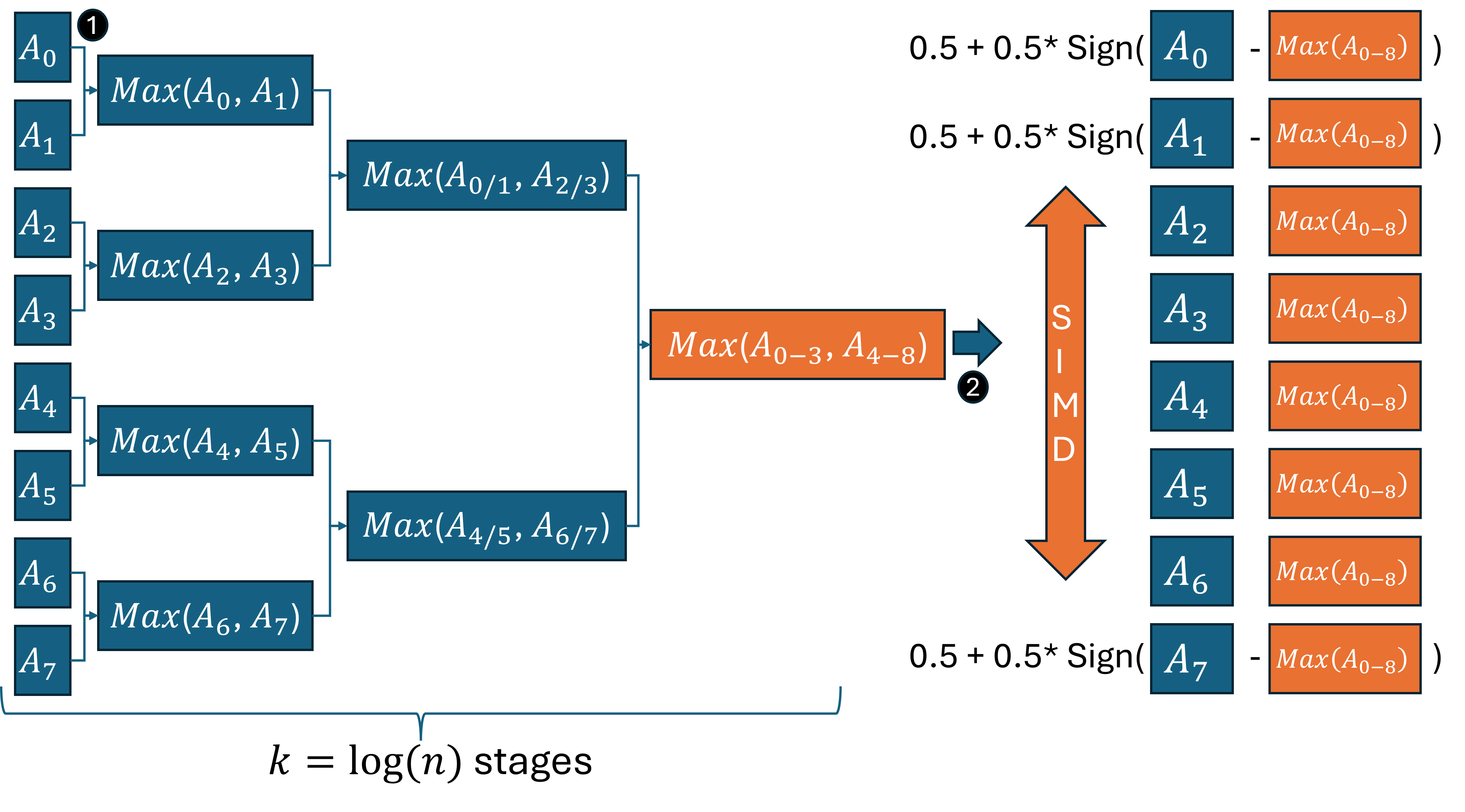}
        \caption{Tournament method.}
        \label{fig:tour}
    \end{subfigure}\hfill
    \begin{subfigure}[b]{0.48\textwidth}
        \centering
        \includegraphics[width=0.95\linewidth]{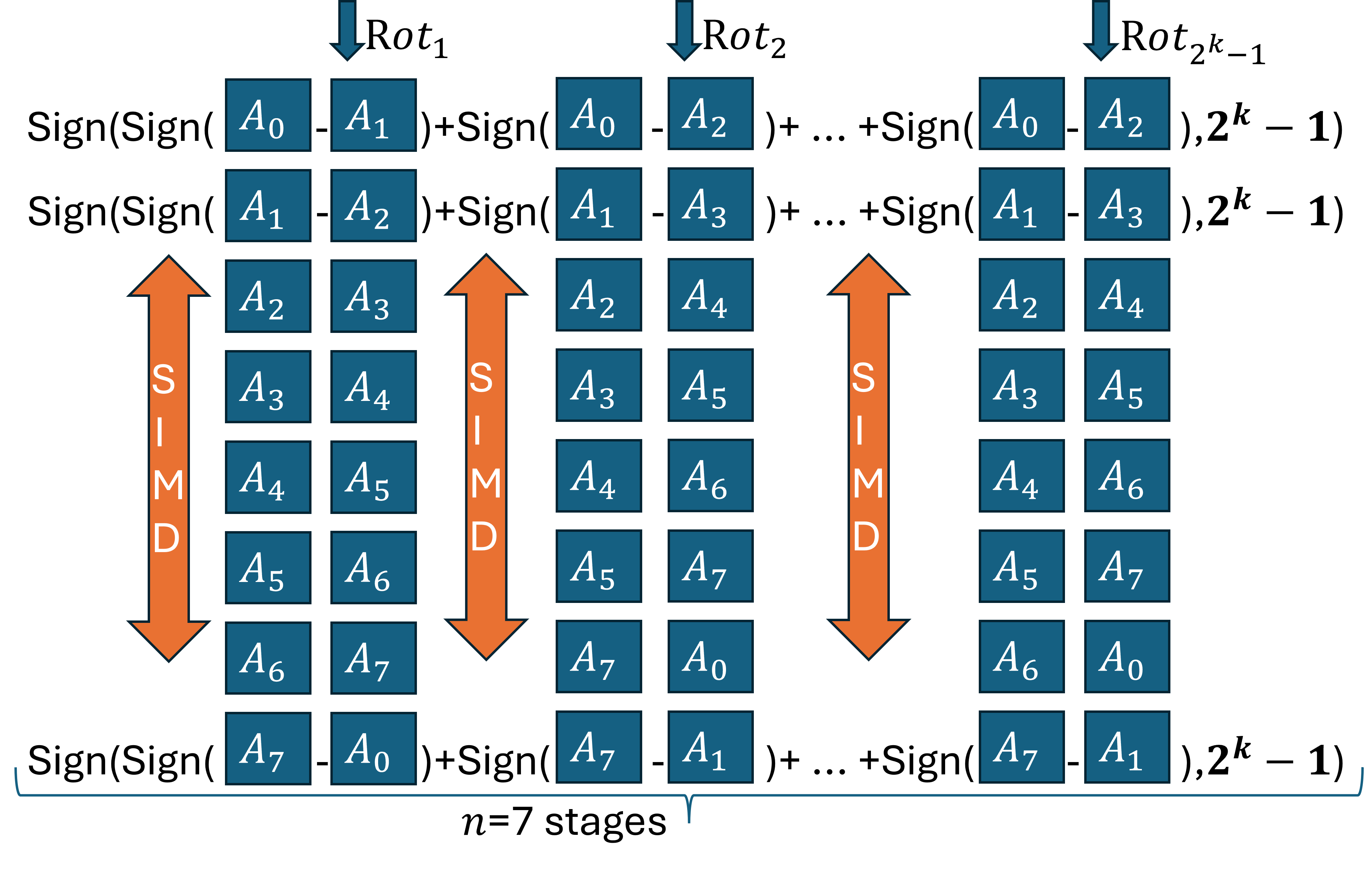}
        \caption{League method.}
        \label{fig:league}
    \end{subfigure}
    \caption{\textbf{Illustration of Prior Comparison-based argmax Methods in HE} (see main text \Cref{sec:argmax}).}
    \label{fig:he-argmax}
\end{figure}
\section{Argmax using SIMD}\label{app:argmaxsimd}
\paragraph{SIMD and Rotations.}
CKKS enables SIMD-style packing, where one ciphertext encrypts a vector $(v_1, \ldots, v_s)$. Operations are applied elementwise, and encrypted vectors can be rotated by $\ell < s$ positions. This allows for patterns such as \rotateandsum~\cite{he4ds} - which aggregates values with $\log s$ rotations and additions.
\begin{figure*}[th!]
    \begin{subfigure}[b]{0.49\textwidth}
        \centering
        \includegraphics[width=0.69\linewidth]{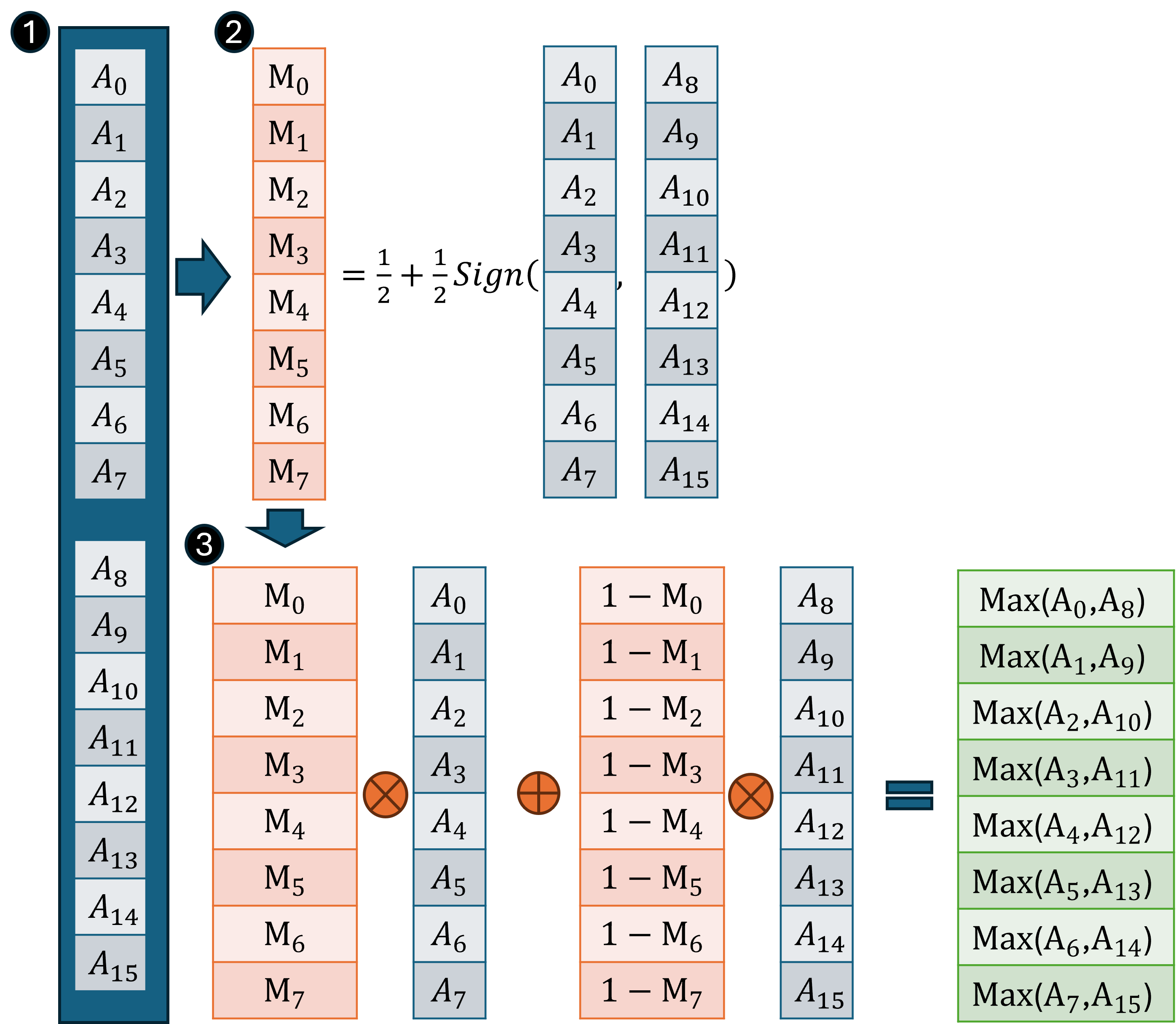}
        \caption{$n > s$.}
        \label{fig:full}
    \end{subfigure}
        \begin{subfigure}[b]{0.49\textwidth}
        \centering
        \includegraphics[width=0.69\linewidth]{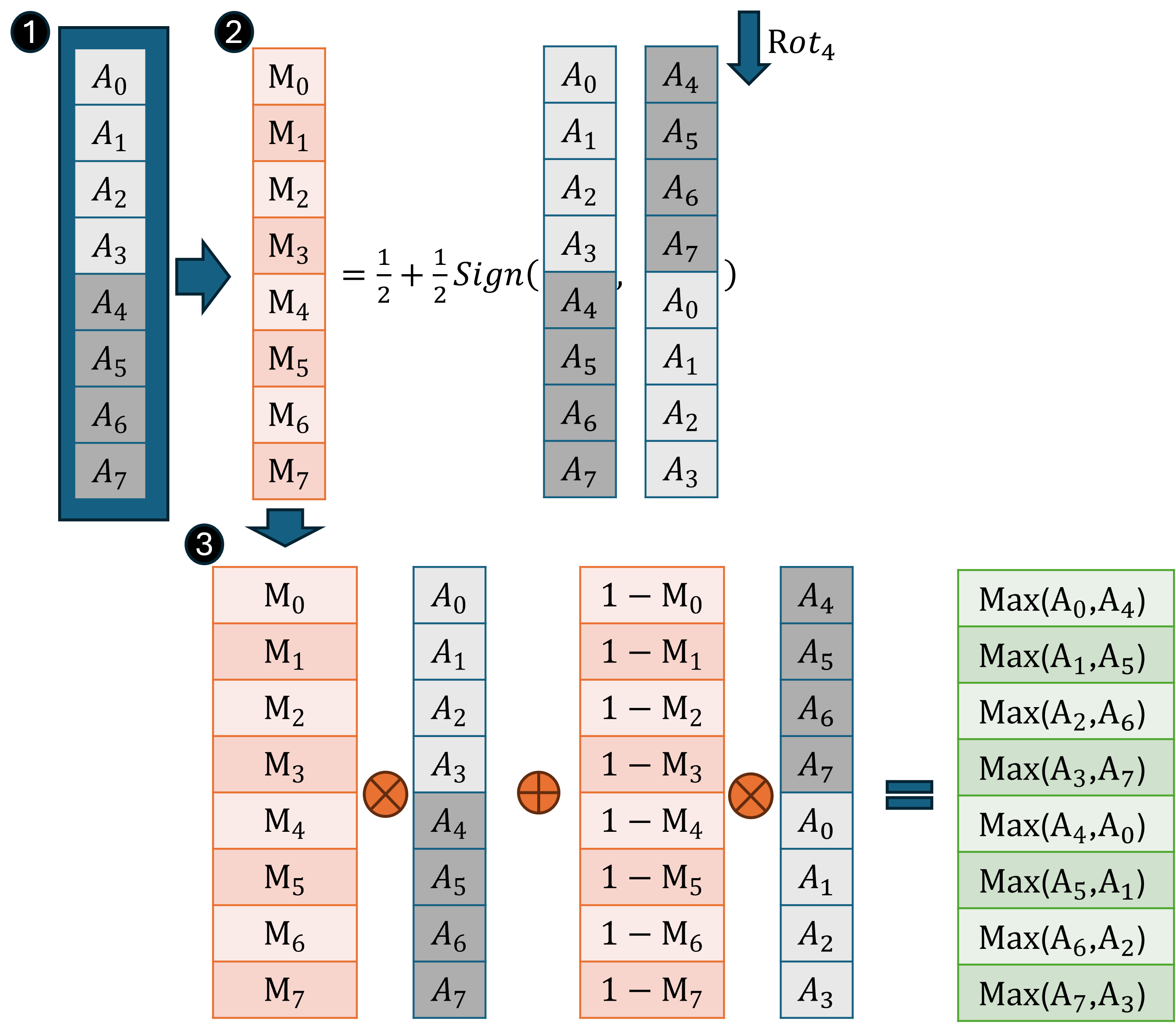}
        \caption{$n=s$.}
        \label{fig:partial}
    \end{subfigure}
    \caption{\gls{HE} \gls{SIMD} example implementation.}
    \label{fig:he-argmax-simd}
\end{figure*}
\Cref{fig:he-argmax-simd} demonstrates how to leverage \gls{SIMD} to execute $n$ $\max(a, b)$ operations in parallel. Step (1) shows the encrypted input; in Step (2), we compute an indicator vector where $M[i] = 1$ if $A[i] > A[i + \frac{n}{2}]$, and 0 otherwise. Subsequently, Step (3) selects (masks) only the elements that match the indicator vector, which returns the desired results.
Consider two cases: $n > s$ (\Cref{fig:full}) and $n = s$ (\Cref{fig:partial}) (The case $n < s$ behaves similarly to $n = s$). Here, $s$ is the maximal number of elements that a single ciphertext can encrypt. \Cref{fig:full} illustrates the case where $n = 16$ elements are encrypted in two ciphertexts, each containing a vector of 8 elements. In contrast, \Cref{fig:partial} shows the case $n = s$, which requires only one ciphertext. In the first case, no rotation is needed, as we can directly subtract one ciphertext from the other using element-wise subtraction. However, this is not the case in \Cref{fig:partial}, where we must homomorphically rotate $A$ by $n/2$ positions. As a result, the output is duplicated, and we only need to keep the top $n/2$ elements.
It appears that many comparisons can be saved by using \gls{SIMD}. Consider the case where $n = s$ for the league method (\Cref{fig:league}) that requires only $n$ homomorphic rotations and $n$ parallel \sign evaluations, with one additional final parallel \sign evaluation. The complexity is thus reduced from $n^2$ to $n$ \sign operations, which, on hardware with sufficient parallel threads, may result in faster execution than the sequential tournament algorithm. When $n>s$ it is worth combining the methods as was done in \cite{BGVCompArgmaxCombined}. Starting with a tournament and when the number of elements is reduced beyond some threshold continue with the league method.

\newpage
\section{Convergence of \cutmax}\label{app:cutmax-convergence-standalone}

\subsection{Preliminaries and Definitions}
The following definitions assume that a vector $x \in \mathbb{R}^n$ has a unique maximum element. We also denote the average of $x$ by $\mu_x = \frac{1}{n}\sum_i x_i$ and its standard deviation by $\sigma_{x} = \sqrt{\frac{1}{n}\sum_i (x_i - \mu_x)^2}$.

\begin{definition}[Max elements]\label{def:maxelems}
Given a vector \( x \in \mathbb{R}^n \) with a unique maximal element, the index of that element is denoted by
$ {i^{*}_{x}} = \argmax_i x_i$ and the index of the second-largest element (excluding the maximum) is denoted by $i^{**}_x = \argmax_{j \neq i^{*}_{x}} x_j$.
\end{definition}

\begin{definition}[$\delta$-gap bound]\label{def:gapcond}
A vector $x \in \mathbb{R}^n$ with a unique maximum at index $i^*_x$ is $\delta$-gap bounded for $\delta \in \mathbb{R}_+$ when
$x_{i^*_x} - x_{i^{**}_x} \geq \delta \cdot \sigma_x$
\end{definition}

\begin{definition}[Gap Ratio]
For $y \in \mathbb{R}^n$, its \emph{gap ratio} is
$\Gamma(y) = \frac{y_{i^*_y}}{y_{i^{**}_y}}$
\end{definition}

\begin{definition}[$c$-semi-standardization]
For $c \in \mathbb{R}_+$, a $c$-semi-standardization function is 
\begin{align}
S_c(y): \mathbb{R}^n \longrightarrow \mathbb{R}^n \\
y_i  \longmapsto  \frac{y_i - \mu_y}{c \cdot \sigma_y} + 1
\end{align}
 \end{definition}

\begin{definition}[\cutmax Iteration]
Given $c \in \mathbb{R}_+$ and an odd $p \in \mathbb{Z}_+$, the \cutmax iteration function is
\begin{align}
\cutmax_t(y^{(t)}): \mathbb{R}^n \longrightarrow \mathbb{R}^n \\ 
y^{(t)}_i  \longmapsto  {\left(S_c (y^{(t)} )\right)}^p
\end{align}
\end{definition}
for some iteration index $t>0$.
 
\subsection{Key Lemmas}

The following lemma shows that running a standardization on a vector $y$ preserves the ordering of elements in it. In addition, it transforms the gap condition.
\begin{lemma}[]
\label{lem:cutmax_monotonic}
The $c$-semi-standardization function ($S_c$) is a strictly increasing function.
\end{lemma}

\begin{proof}

For every $y \in \mathbb{R}^n$ and for every $i, j  \in [n] \,, i \neq j$, because $\mu_y, \sigma_y, c >0$ it follows that
\begin{align}
y_i > y_j &\iff y_i - \mu_y > y_j - \mu_y \\
          &\iff \frac{y_i - \mu_y}{c \cdot \sigma_y} > \frac{y_j - \mu_y}{c \cdot \sigma_y} 
          \iff S_c(y)_i > S_c(y)_j
\end{align}
\end{proof}

\begin{lemma}[]
\label{lem:preservegap}
Let $y \in \mathbb{R}^n$ be a $\delta$-gap bounded vector then $S_c(y)$ is a $\frac{\delta}{c}$-gap bounded vector.
\end{lemma}

\begin{proof}
Set $z := S_c(y)$, $i^*_z = i^*_y$ and $i^{**}_z = i^{**}_y$. then
\begin{equation}
z_{i^*_z} - z_{i^{**}_z} = \frac{y_{i^*_y} - y_{i^{**}_y}}{c \cdot \sigma_y} \underset{\text{$y$ is $\delta$-gap bounded}}{\ge} \frac{\delta \cdot \sigma_y}{c \cdot \sigma_y} = \frac{\delta}{c}
\end{equation}
\end{proof}

The next Lemma shows that applying a $c$-semi-standardization function $S_c$ on some vector $y$ results with negative coefficients in places where $y_i$ is at least $c$ standard deviations below the mean of $y$.

\begin{lemma}[Unconditional gap growth for one CutMax step]\label{lem:cutmax-unconditional}
Let $\yT\in\mathbb{R}^n$ have a unique maximizer at $i^*$ and let $i^{**}\neq i^*$ be the runner-up index.
Assume $\yT$ is $\delta$-gap bounded in the sense that
\(
\yT_{i^*}-\yT_{i^{**}}\ge \delta\,\sigma_{\yT}
\)
(Def.~\ref{def:gapcond}), where $\sigma_{\yT}$ denotes the population standard deviation.
Assume additionally that $\min_i y^{(t)}_i \geq \mu_{y^{(t)}} - c \sigma_{y^{(t)}} + \epsilon$ for small $\epsilon > 0$, ensuring all $z_i > 0$.
Fix $c>0$ and an odd integer $p\ge 1$, and set
$
z=S_c(\yT)=\frac{\yT-\mu_{\yT}}{c\,\sigma_{\yT}}+1
$
and
$
\yTp = z^{\odot p}
$
(the elementwise odd-power step of CutMax).
Then the post-step gap ratio satisfies
$
\frac{\yTp_{i^*}}{\yTp_{i^{**}}}
= \left(\frac{z_{i^*}}{z_{i^{**}}}\right)^{\!p}
\;\;\ge\;\;
\left(1+\frac{\delta}{\,c+\sqrt{n}\,}\right)^{\!p}.
$
Consequently, after $T$ CutMax iterations,
$
\frac{y^{(T)}_{i^*}}{y^{(T)}_{i^{**}}}
\;\ge\;
\left(1+\frac{\delta}{\,c+\sqrt{n}\,}\right)^{\!pT}.
$
\end{lemma}

\begin{proof}
By Lemma~\ref{lem:preservegap},
\(
z_{i^*}-z_{i^{**}}\ge \delta/c.
\)
Therefore
\[
\frac{z_{i^*}}{z_{i^{**}}}
=1+\frac{z_{i^*}-z_{i^{**}}}{z_{i^{**}}}
\;\ge\;
1+\frac{\delta}{c\,z_{i^{**}}}.
\]
To upper-bound $z_{i^{**}}$, write $r_i=\frac{\yT_i-\mu_{\yT}}{\sigma_{\yT}}$.
From the definition of the population standard deviation, we have
\(
\frac1n\sum_i r_i^2=1
\Rightarrow
\sum_i r_i^2=n
\),
hence
\(
r_{i^{**}}\le \sqrt{n}
\)
and thus
\(
z_{i^{**}}=1+\frac{r_{i^{**}}}{c}\le 1+\frac{\sqrt{n}}{c}.
\)
Combining the bounds gives
\[
\frac{z_{i^*}}{z_{i^{**}}}
\;\ge\;
1+\frac{\delta}{c\left(1+\sqrt{n}/c\right)}
=
1+\frac{\delta}{\,c+\sqrt{n}\,}.
\]
Finally, the CutMax power step yields
\(
\frac{\yTp_{i^*}}{\yTp_{i^{**}}}
=
\left(\frac{z_{i^*}}{z_{i^{**}}}\right)^{p}
\),
which gives the stated inequality. Iterating $T$ times multiplies the exponent by $T$.
\end{proof}

\begin{lemma}[Per-step gap amplification]\label{lem:gap-growth}
Let $\mathbf y^{(t)}\in\mathbb R^{n}$ satisfy the $\delta$–gap condition
{\normalfont(Def.~\ref{def:gapcond})}, fix $c>0$ and odd $p\ge1$, and set
$z=S_c(\mathbf y^{(t)})$ and $\mathbf y^{(t+1)}=z^{\odot p}$.
Then the gap ratio satisfies
\[
  \Gamma^{(t+1)}
  \;=\;
  \Gamma^{(t)}\!\left(\frac{z_{i^*}}{z_{i^{**}}}\right)^{\!p}
  \;\ge\;
  \Gamma^{(t)}\!\left(1+\frac{\delta}{\,c+\sqrt{n}\,}\right)^{\!p}.
\]
\end{lemma}

\begin{proof}
By Lemma~\ref{lem:cutmax-unconditional},
\(
  \frac{z_{i^*}}{z_{i^{**}}}\ge 1+\frac{\delta}{\,c+\sqrt{n}\,}
\).
Since $\Gamma^{(t+1)}=\Gamma^{(t)}(z_{i^*}/z_{i^{**}})^{p}$, the claim follows.
\end{proof}

\begin{theorem}[CutMax Convergence]\label{thm:cutmax-convergence}
Let $\mathbf{x}\in\mathbb R^{n}$ satisfy the $\delta$–gap condition with $\delta>0$,
and run CutMax with parameters $(p,c,T)$, $c>0$, odd $p\ge1$.
Then
\[
  \Gamma^{(T)}
  \;\ge\;
  \Gamma^{(0)}\!\left(1+\frac{\delta}{\,c+\sqrt{n}\,}\right)^{\!pT}.
\]
In particular, to ensure $\Gamma^{(T)}\ge k$, it suffices to take
\[
  T \;\ge\;
  \left\lceil
    \frac{\ln k - \ln \Gamma^{(0)}}{\,p\,\ln\!\bigl(1+\frac{\delta}{\,c+\sqrt{n}\,}\bigr)}
  \right\rceil
  \;\;\le\;\;
  \left\lceil
    \frac{\ln k}{\,p\,\ln\!\bigl(1+\frac{\delta}{\,c+\sqrt{n}\,}\bigr)}
  \right\rceil,
\]
where the last inequality uses $\Gamma^{(0)}\ge1$.
\end{theorem}

\begin{proof}
Iterating Lemma~\ref{lem:gap-growth} yields
\(
\Gamma^{(T)}
\ge \Gamma^{(0)}\!\left(1+\frac{\delta}{\,c+\sqrt{n}\,}\right)^{pT}.
\)
Solving for $T$ gives the stated bound.
\end{proof}

\begin{table}[h]
\centering
\small
\caption{Min.\ iterations $T_{\min}$ for $n=32{,}768$ (Mistral-7B), $p=13$, $c=5$.}
\begin{tabular}{@{}lcc@{}}
\toprule
$\delta$ & $k=10$ & $k=100$ \\
\midrule
0.5 &  66 & 132 \\
1   &  34 &  67 \\
2   &  17 &  34 \\
3   &  12 &  23 \\
5   &   7 &  14 \\
10  &   4 &   7 \\
\bottomrule
\end{tabular}
\end{table}
\end{document}